\documentclass[11pt]{article}

\usepackage[a4paper,margin=1in]{geometry}
\usepackage{amsmath,amssymb,amsthm,mathtools}
\usepackage{booktabs,tabularx,array}
\usepackage{graphicx}
\usepackage{caption}
\usepackage{hyperref}
\hypersetup{hidelinks}
\usepackage[numbers,sort&compress]{natbib}
\usepackage{microtype}
\graphicspath{{figures/}}
\newtheorem{assumption}{Assumption}
\newtheorem{proposition}{Proposition}

\newcommand{\diag}{\mathrm{diag}}
\newcommand{\E}{E}

\title{DFSC: Error-Controlled Differentiable Mittag--Leffler Propagation for Fractional Scientific Machine Learning}
\author{
Ning Hu\textsuperscript{1,*}
\and Haitao Duan\textsuperscript{2}
\and Shuqun Li\textsuperscript{3}
\and Chuyang Hu\textsuperscript{4}
\\[0.6em]
\small \textsuperscript{1}School of Mechanical Engineering, Hangzhou Dianzi University\\
\small 1158 No. 2 Street, Qiantang District, Hangzhou 310018, China\\
\small \textsuperscript{2}School of Computer Science, Hangzhou Dianzi University\\
\small 1158 No. 2 Street, Qiantang District, Hangzhou 310018, China\\
\small \textsuperscript{3}School of Mathematics and Physics, Xi'an Jiaotong-Liverpool University\\
\small 111 Ren'ai Road, Suzhou 215123, China\\
\small \textsuperscript{4}Department of Computer Science and Engineering, University of California, Santa Cruz\\
\small 1156 High Street, Santa Cruz, CA 95064, USA\\
\small \textsuperscript{*}Corresponding author: \href{mailto:jluhooning@163.com}{jluhooning@163.com}\\
\small Haitao Duan: \href{mailto:25050509@hdu.edu.cn}{25050509@hdu.edu.cn}\\
\small Shuqun Li: \href{mailto:Shuqun.Li24@student.xjtlu.edu.cn}{Shuqun.Li24@student.xjtlu.edu.cn}\\
\small Chuyang Hu: \href{mailto:chu207@ucsc.edu}{chu207@ucsc.edu}\\
\small ORCID: Ning Hu \href{https://orcid.org/0000-0002-2718-0385}{0000-0002-2718-0385}\\
\small Haitao Duan \href{https://orcid.org/0009-0006-4611-8639}{0009-0006-4611-8639};
Shuqun Li \href{https://orcid.org/0009-0007-9255-018X}{0009-0007-9255-018X}\\
\small Chuyang Hu \href{https://orcid.org/0009-0007-1787-685X}{0009-0007-1787-685X}
}
\date{}

\begin{document}
\maketitle

\begin{abstract}
Fractional scientific machine learning requires numerical operators that can be differentiated, batched, accelerated, and composed with neural networks. When the dominant linear fractional evolution is known through a Mittag--Leffler propagator, repeatedly reconstructing that response with a history solver or relearning it from data is unnecessary. We present DFSC, a PyTorch environment organized around the Mittag--Leffler Spectral Layer (MLSL). The layer separates known fractional propagation from data-driven corrections, so neural modules learn only unresolved dynamics while fractional orders and residual-network parameters are optimized jointly. Its adaptive algorithm increases special-function truncation depth or Lanczos dimension until successive differentiable evaluations satisfy a requested tolerance. In the negative-real alternating-series regime, DFSC additionally returns a certified first-omitted-term bound; outside that regime it explicitly labels estimates as empirical.

DFSC supports dense, sparse, matrix-free, self-adjoint, generalized, and controlled complex operator paths; trainable fractional orders; direct inverse problems; residual neural composition; and CPU/GPU execution. The certified series bound covers all 59 eligible reference cases, with median bound/error effectivity 1.246 for resolved errors. Reusing a prepared batched Lanczos basis gives identical fixed-path values and reduces repeated-query time by $4.61$--$7.11\times$ on CPU and $13.07$--$16.22\times$ on an RTX~5070, excluding one-time preparation. A 27-case inverse matrix finds full-rank local curvature throughout, while remaining explicitly model-conditional. External solver and mixed real-data results support DFSC as an error-aware optional primitive for matched fractional structure, rather than a general replacement for fractional solvers or neural models.
\end{abstract}

\paragraph{Keywords.} Fractional scientific machine learning; differentiable programming; Mittag--Leffler function; spectral layer; hybrid neural models; scientific software.

\section{Introduction}

Fractional differential equations describe memory, anomalous transport, and nonlocal response in a compact form \citep{podlubny1999fractional,kilbas2006theory,diethelm2010analysis}. Their numerical treatment is mature enough to include convolution quadrature, L1-type schemes, predictor--corrector methods, and spectral discretizations \citep{lubich1986discretized,jin2016l1,lischke2020fractional}. Scientific machine learning (SciML), however, changes the software contract. A numerical operator used during training must propagate derivatives with respect to model parameters, preserve batch and device semantics, expose diagnostics, and compose with neural modules without moving data outside the computational graph \citep{baydin2018autodiff,innes2019differentiable,rackauckas2020universal}.

Three established software directions address related parts of this problem. General equation-solving ecosystems provide broad problem and algorithm coverage. Fractional solver libraries focus on robust time integration for several definitions and equation classes. Neural SciML libraries provide physics-informed residuals and learned operators, including fPINNs, DeepONets, and Fourier neural operators \citep{pang2019fpinn,lu2021deeponet,li2021fno,lu2021deepxde,kovachki2023neuraloperator}. These directions remain the appropriate choices when a general time-stepper, an unknown solution operator, or a flexible residual learner is required.

This work addresses a narrower but recurring setting. After spatial discretization or modal reduction, a linear Caputo system can take the nondimensional form
\begin{equation}
{}^{C}D_t^{\alpha}u(t)+A_{\beta}u(t)=f(t),
\qquad 0<\alpha\leq 2,
\label{eq:governing}
\end{equation}
where time, state, and operator scales have been absorbed into the nondimensional variables. When $A_{\beta}$ is diagonalizable, its homogeneous solution is governed by a Mittag--Leffler spectral propagator. The dominant evolution is then known, while fractional orders, forcing, residual physics, or observation maps may remain unknown. Replacing this structure by a generic network discards useful inductive bias; differentiating through a full history solver can be unnecessary when only selected query times are required.

DFSC \citep{hu2026dfsc} turns this known propagator into a software ecosystem. Its core component, MLSL, behaves as a PyTorch layer: it accepts batched tensors, keeps $\alpha$ and $\beta$ trainable, runs on CPU or GPU, and supports residual neural heads. Around the primitive, DFSC adds operator construction, sparse and matrix-free actions, direct and history-aware algorithms, automatic selection, reliability reports, inverse workflows, application templates, and release-oriented validation. Figure~\ref{fig:ecosystem} summarizes this organization.

From a neurocomputing perspective, DFSC is a structured learning component rather than a standalone time integrator: fractional orders, propagator parameters, and neural residuals participate in joint gradient-based optimization.

\begin{figure}[t]
\centering
\includegraphics[width=0.98\linewidth]{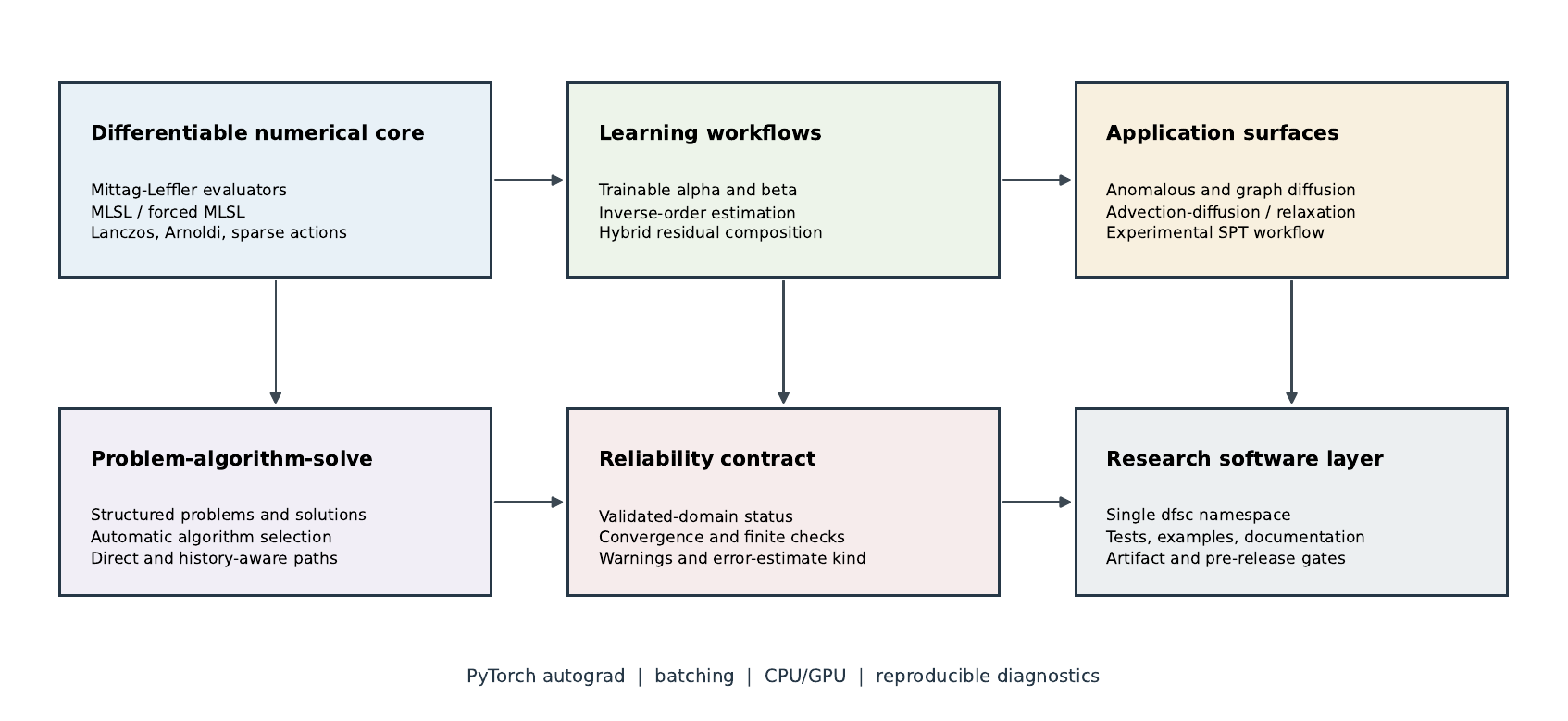}
\caption{DFSC architecture. A differentiable Mittag--Leffler core supports learning workflows and a problem--algorithm--solve software layer. Reliability metadata accompanies solutions instead of being inferred from successful execution alone.}
\label{fig:ecosystem}
\end{figure}

The contribution is not a new fractional derivative or a replacement for general fractional solvers. It is a computational abstraction and its supporting ecosystem. Specifically, this paper contributes:
\begin{enumerate}
\item a trainable PyTorch MLSL primitive and hybrid residual formulation that separate known fractional propagation from unknown corrections while exposing fractional orders and neural parameters to joint optimization across dense, sparse, matrix-free, generalized, and controlled complex paths;
\item an adaptive differentiable propagation algorithm with explicit component-wise error budgets, a certified alternating-series remainder in its stated domain, and empirical diagnostics elsewhere;
\item a unified \texttt{dfsc} package with problem--algorithm--solve semantics, automatic selection, explicit applicability and reliability contracts, history-aware fallbacks, application templates, and reproducibility tooling; and
\item a layered evidence suite spanning numerical identities and gradients, tolerance calibration, external FDEint/pycaputo benchmarks, synthetic learning tasks, 12 real experimental conditions in four physical domains, and software-artifact gates.
\end{enumerate}

\section{Related Software and Methods}

\subsection{Differentiable solvers and neural SciML ecosystems}
DifferentialEquations.jl and SciMLSensitivity demonstrate how common problem interfaces, solver selection, sensitivity algorithms, and solution metadata can support a large SciML ecosystem \citep{rackauckas2017differentialequations,rackauckas2020universal}. DeepXDE combines physics-informed learning, operator learning, multiple backends, geometries, and boundary conditions \citep{lu2021deepxde}. NeuralOperator supplies reusable neural-operator architectures and training infrastructure \citep{kossaifi2024neuraloperator}. DFSC adopts the ecosystem principles of composability, explicit algorithms, and structured solutions, but specializes them to differentiable fractional propagation in PyTorch.

\subsection{Fractional learning and operator approximation}
fPINNs impose fractional residuals and have been extended to Monte Carlo discretizations and inverse settings \citep{pang2019fpinn,guo2022mcpinn,sheng2024mcfpinn}. FNO and DeepONet learn mappings between function spaces \citep{li2021fno,lu2021deeponet}. In contrast, MLSL does not learn a known propagator. It evaluates the retained propagator and exposes unknown orders or residual terms to optimization. Hybrid composition is therefore most useful when the analytical backbone is credible but incomplete.

\subsection{Fractional numerical solvers}
Classical methods remain indispensable when a complete trajectory or a general fractional equation must be advanced in time. L1 methods approximate Caputo history integrals on a grid \citep{jin2016l1}; convolution quadrature provides a general discretization framework \citep{lubich1986discretized}; and spectral approaches exploit spatial operator structure \citep{lischke2020fractional}. FractionalDiffEq.jl provides a Julia solver suite with a unified interface and multiple fractional equation classes \citep{qu2025fractionaldiffeq}. FDEint targets neural fractional differential equations through a PyTorch predictor--corrector implementation \citep{zimmering2024optimising}. PyCaputo supplies experimental Python tools for derivatives, quadrature, and fractional ODE stepping \citep{pycaputo2026}.

DFSC overlaps with these libraries only on the history-aware fallback path. Its distinguishing object is the directly queried, trainable propagator layer for systems whose Mittag--Leffler representation is available. Consequently, DFSC should complement rather than replace broad solver libraries.

\subsection{Mittag--Leffler evaluation}
Mittag--Leffler functions generalize the exponential response of integer-order systems \citep{gorenflo2014mittag}. Reliable numerical evaluation requires attention to series cancellation, asymptotic validity, and parameter-dependent derivatives \citep{garrappa2015mittag}. DFSC currently validates a declared region rather than claiming global special-function coverage. This restriction is carried into each solution through a reliability report.

\section{Mathematical and Computational Core}

\subsection{Spectral propagation}
Assume that the nondimensional operator in Eq.~\eqref{eq:governing} admits
\begin{equation}
A_{\beta}=\Phi\diag(\mu_1(\beta),\ldots,\mu_M(\beta))\Phi^{-1}.
\label{eq:eigendecomp}
\end{equation}
For the homogeneous problem, the retained modal solution is
\begin{equation}
u_M(t)=\Phi\diag\!\left[\E_{\alpha}\!\left(-\mu_n(\beta)t^{\alpha}\right)\right]\Phi^{-1}u_0.
\label{eq:homogeneous}
\end{equation}
For forcing $f$, the mild form is
\begin{equation}
u(t)=\E_{\alpha}(-A_{\beta}t^{\alpha})u_0+
\int_0^t (t-s)^{\alpha-1}\E_{\alpha,\alpha}\!\left[-A_{\beta}(t-s)^{\alpha}\right]f(s)\,ds.
\label{eq:forced}
\end{equation}
All terms in Eqs.~\eqref{eq:governing}--\eqref{eq:forced} are nondimensional. In dimensional applications, $A_{\beta}t^{\alpha}$ is replaced by the corresponding dimensionless combination after selecting characteristic time and operator scales.

Equation~\eqref{eq:homogeneous} defines the MLSL forward map. The implementation evaluates modal multipliers for all batch elements and query times using tensor operations. The graph shown in Fig.~\ref{fig:graph} preserves derivatives with respect to $\alpha$, $\beta$, operator parameters, inputs, and residual-network weights.

\begin{figure}[t]
\centering
\includegraphics[width=0.92\linewidth]{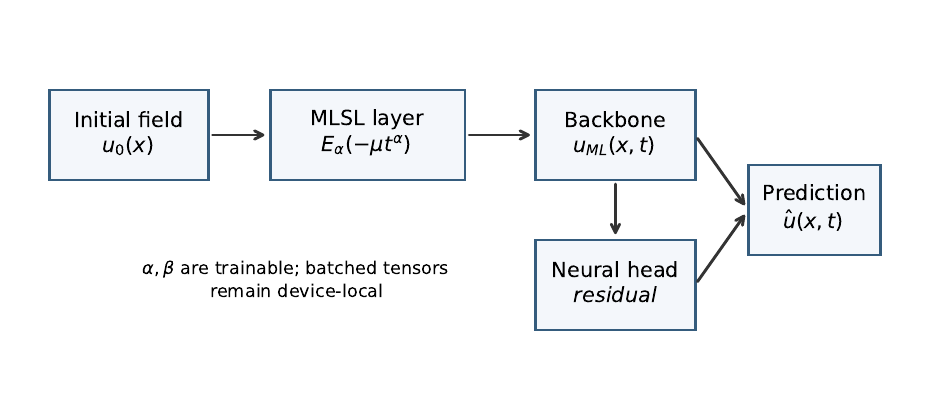}
\caption{MLSL in a hybrid neural computation graph. The known propagator supplies a structured backbone; a neural head represents only the unresolved correction.}
\label{fig:graph}
\end{figure}

\subsection{Evaluator and gradient contract}
DFSC evaluates $\E_{\alpha}$ and $\E_{\alpha,\beta}$ with regime-aware series and asymptotic branches. For radius $r=|z|$ in a transition interval $[r_-,r_+]$, it uses
\begin{equation}
q=\operatorname{clip}\!\left(\frac{r-r_-}{r_+-r_-},0,1\right),\qquad
s(q)=3q^2-2q^3,\qquad
E_{\mathrm{blend}}=(1-s)E_{\mathrm{series}}+sE_{\mathrm{asymp}}.
\label{eq:blend}
\end{equation}
Because $s'(0)=s'(1)=0$, the weight itself joins with zero endpoint slope and reduces artificial derivative jumps caused by a hard switch. This does not prove globally smooth approximation error: branch approximations need not agree exactly, and routing thresholds may depend on detached policy variables. The evaluator reports whether arguments lie inside the tested domain, whether the iteration converged, whether values are finite, and what kind of error estimate is available. Strict mode rejects inputs outside the declared region.

Trainable orders are represented by unconstrained variables passed through bounded transforms. For a loss $L$, PyTorch computes
\begin{equation}
\frac{\partial L}{\partial \theta_{\alpha}}=
\frac{\partial L}{\partial u_M}
\frac{\partial u_M}{\partial \alpha}
\frac{\partial \alpha}{\partial \theta_{\alpha}},
\qquad
\frac{\partial L}{\partial \theta_{\beta}}=
\frac{\partial L}{\partial u_M}
\frac{\partial u_M}{\partial \beta}
\frac{\partial \beta}{\partial \theta_{\beta}}.
\label{eq:chain}
\end{equation}
No detached special-function call is used on the validated real-valued path.

The adaptive controller evaluates a nested work schedule $w_1<\cdots<w_J$. For either truncation depth or Krylov dimension, it selects the first $w_j$ satisfying
\begin{equation}
\eta_j=\frac{\|u_{w_j}-u_{w_{j-1}}\|}{\max(\|u_{w_j}\|,\epsilon)}
\leq \mathrm{rtol}+\frac{\mathrm{atol}}{\max(\|u_{w_j}\|,\epsilon)}.
\label{eq:adaptive}
\end{equation}
The stop decision is detached, but $u_{w_j}$ retains its computational graph. Thus gradients are computed through the selected candidate, not through the discrete decision. The resulting map is piecewise differentiable on regions where the selected budget is constant; a gradient jump may occur on a budget-selection boundary. The indicator $\eta_j$ measures successive numerical disagreement; it is not claimed to bound spectral truncation, model reduction, or systematic special-function error.

For $z=-x\leq0$, $0<\alpha\leq1$, and $\beta>0$, write the retained series as $\sum_{k=0}^{N-1}(-1)^k a_k$ with $a_k=x^k/\Gamma(\alpha k+\beta)$. Once $a_N\leq a_{N-1}$, the gamma-ratio monotonicity places the omitted tail in the decreasing alternating regime, and
\begin{equation}
\left|E_{\alpha,\beta}(-x)-\sum_{k=0}^{N-1}\frac{(-x)^k}{\Gamma(\alpha k+\beta)}\right|
\leq \frac{x^N}{\Gamma(\alpha N+\beta)}.
\label{eq:alternating-bound}
\end{equation}
DFSC reports this quantity as a rigorous series-truncation bound only when the stated conditions are verified. Krylov and spatial-projection terms remain separately assessed; an unavailable component is recorded as missing rather than zero.

\subsection{Operator actions and algorithm paths}
Explicit diagonalization is suitable for regular spectral bases and moderate dense systems. DFSC also provides fixed and adaptive Lanczos actions for self-adjoint operators, Arnoldi actions for a controlled complex regime, and sparse or matrix-free contracts that avoid materializing dense matrices. A Caputo--L1 stepper and an FFT-accelerated full-trajectory L1 operator cover problems for which direct propagation is unavailable or a complete history is required. The FFT operator is an offline trajectory operator, not an online implicit integrator.

\subsection{Theoretical properties}
\begin{assumption}[Retained spectral representation]
The operator is diagonalizable on the retained subspace, and the evaluated arguments lie inside the reliability domain declared by the selected evaluator.
\end{assumption}

\begin{proposition}[Spectral consistency]
Under the retained spectral representation, MLSL applies the exact retained modal Mittag--Leffler evolution up to spatial truncation, special-function approximation, and floating-point error.
\end{proposition}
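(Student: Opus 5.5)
The plan is to reduce the claim to the scalar modal problem and then account separately for each source of discrepancy between what MLSL computes and the exact retained evolution. The exact object is $u_M(t)=\Phi\,\diag[E_\alpha(-\mu_n t^\alpha)]\Phi^{-1}u_0$ from Eq.~\eqref{eq:homogeneous}. The first step is to verify that this is indeed the exact solution of Eq.~\eqref{eq:governing} (with $f=0$) restricted to the retained subspace. By the Assumption, $A_\beta=\Phi\Lambda\Phi^{-1}$ there, so the change of variables $v=\Phi^{-1}u$ decouples the system into scalar Caputo equations ${}^{C}D_t^\alpha v_n+\mu_n v_n=0$. For $0<\alpha\le1$ we take $v_n(0)=(\Phi^{-1}u_0)_n$. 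For $1<\alpha\le2$ we additionally take zero initial velocity, which is the convention implicit in Eq.~\eqref{eq:homogeneous}. The unique solution of each scalar equation is $v_n(t)=E_\alpha(-\mu_n t^\alpha)v_n(0)$, a standard fact from Laplace transforms: $\mathcal{L}[E_\alpha(-\mu t^\alpha)](s)=s^{\alpha-1}/(s^\alpha+\mu)$, which matches the transformed Caputo equation. Mapping back with $\Phi$ gives Eq.~\eqref{eq:homogeneous}. The forced case is handled the same way, with Eq.~\eqref{eq:forced} as the Duhamel form on each mode.

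Next I would describe precisely what MLSL computes: $\widetilde u=\widetilde\Phi\,\diag[\widetilde E_\alpha(-\widetilde\mu_n t^\alpha)]\widetilde\Phi^{-1}u_0$, evaluated in floating point. Here tildes denote the retained or approximated eigenpairs and the evaluator output. On the Lanczos and Arnoldi paths, $\widetilde\Phi$ is replaced by the Krylov basis. I would then telescope
\begin{equation*}
\widetilde u-u=(u_M-u)+\bigl(\Phi D\Phi^{-1}-\Phi\widetilde D\Phi^{-1}\bigr)u_0+(\text{fl. pt.}),
\end{equation*}
where $D$ and $\widetilde D$ denote the exact and evaluated diagonal multipliers. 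The three pieces are the following.
\begin{itemize}
\item[(i)] The spatial or spectral truncation error $u_M-u$, which is zero by definition when the comparison is against the retained evolution.
\item[(ii)] The special-function error, bounded by $\|\Phi\|\,\|\Phi^{-1}\|\max_n|E_\alpha(z_n)-\widetilde E_\alpha(z_n)|\,\|u_0\|$. Within the declared reliability domain this is controlled by the evaluator. On the negative-real alternating regime it is controlled explicitly by Eq.~\eqref{eq:alternating-bound}. On the blended region of Eq.~\eqref{eq:blend}, a convex combination of two approximants has error at most the maximum of their individual errors.
\item[(iii)] Rounding error, which is standard backward-stable error for the batched matrix products.
\end{itemize}

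The main obstacle is making ``up to'' precise without claiming more than is proven. I would therefore state the conclusion as an additive bound: the error is at most the spatial truncation error, plus $\kappa(\Phi)$ times the special-function error, plus $O(\epsilon_{\mathrm{mach}})$. The Krylov approximation would be listed as a separately assessed component, consistent with the paper's error-budget convention. The subtle points are two. First, non-normal $\Phi$ introduces the condition number $\kappa(\Phi)$; for self-adjoint paths $\Phi$ is unitary and $\kappa=1$. Second, the adaptive stopping rule in Eq.~\eqref{eq:adaptive} does not itself bound (ii). I would rely only on the evaluator's domain guarantee, and cite Eq.~\eqref{eq:alternating-bound} where a rigorous certificate exists.
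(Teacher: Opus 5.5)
Your proposal is correct and follows the same route as the paper: diagonalize on the retained subspace, solve each scalar Caputo mode by $E_\alpha(-\mu_n t^\alpha)$, reconstruct through $\Phi$, and split the discrepancy additively into truncation, evaluation, and rounding contributions, mirroring the triangle-inequality decomposition of Appendix~\ref{app:proofs}. Your $\kappa(\Phi)$ factor, the zero-initial-velocity convention for $1<\alpha\le 2$, and the convex-combination bound on the blend are sound refinements the paper leaves implicit; two small slips are that your telescoped middle term has its sign reversed, and that errors in the computed $\widetilde\Phi,\widetilde\mu_n$ are silently absorbed into your rounding term.
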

\begin{proof}
Project $u_0$ onto the retained basis, solve each scalar Caputo mode by its Mittag--Leffler response, and reconstruct through $\Phi$. The remaining errors arise only from the stated numerical approximations. A detailed decomposition is given in Appendix~\ref{app:proofs}.
\end{proof}

\begin{proposition}[History-free query complexity]
For $B$ initial states, $Q$ query times, and $M$ retained diagonal modes, modal multiplication costs $O(BQM)$ after basis transforms and requires no recurrent temporal state. History methods retain their separate advantages for sequential forcing, nonlinear stepping, and models without a usable propagator.
\end{proposition}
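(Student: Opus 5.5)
The argument is an operation count on the forward map in Eq.~\eqref{eq:homogeneous}. The basis transforms are taken as already applied, so the input is the matrix of modal coefficients $c=\Phi^{-1}u_0\in\mathbb{C}^{B\times M}$ and the output is the array of modal coefficients $\hat c_{b,q,n}$ before reconstruction through $\Phi$.

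\emph{Step 1: separate what is shared across the batch.} The multiplier $E_{\alpha}(-\mu_n(\beta)t_q^{\alpha})$ depends only on the pair $(q,n)$ and not on the batch index $b$. I would therefore first build the $Q\times M$ table of arguments $z_{q,n}=-\mu_n(\beta)t_q^{\alpha}$. This takes $O(Q)$ powers $t_q^\alpha$, $O(M)$ eigenvalue evaluations, and $O(QM)$ products.

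\emph{Step 2: evaluate the Mittag--Leffler multipliers.} Each entry of the table is sent through the evaluator of the previous subsection. For a fixed work budget (a truncation depth $N$, or a constant-cost asymptotic or blended branch), one evaluation costs $O(N)$. Under the Assumption the arguments lie in the declared domain, so $N$ is bounded there, either by the tolerance schedule or by the first-omitted-term criterion of Eq.~\eqref{eq:alternating-bound}. The whole table then costs $O(NQM)$, which is $O(QM)$ once $N$ is treated as a constant independent of $B,Q,M$. This is the step I expect to need care: the claim hides the special-function cost in the constant. I would state this explicitly as a per-evaluation cost $c_{\mathrm{ML}}$ and note that it is bounded on the validated region. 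The adaptive controller uses a finite nested schedule $w_1<\cdots<w_J$, so its cost is at most a factor $J$.

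\emph{Step 3: apply the multipliers.} The modal coefficients are the broadcast product $\hat c_{b,q,n}=E_{q,n}\,c_{b,n}$, which is exactly $BQM$ multiplications. Adding steps 1--3 gives $O(QM(1+c_{\mathrm{ML}})+BQM)=O(BQM)$, since $B\ge1$.

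\emph{Step 4: absence of temporal state.} For each query time $t_q$ the formula involves only $t_q$, $c$, and the spectrum. It uses no values at earlier times and no convolution over past states. Queries can therefore be computed independently and in any order, and no history buffer is carried. By contrast, an L1 or convolution-quadrature scheme needs the whole history, so its memory grows with the number of steps. The final sentence of the statement is a scope remark rather than a claim to prove. I would justify it by noting that with time-dependent forcing Eq.~\eqref{eq:forced} reintroduces a convolution integral over $s$, and that nonlinear terms break the diagonal representation. These are exactly the cases the L1 and FFT paths are meant to cover.
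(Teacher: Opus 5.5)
Your operation count is correct and is essentially the argument the paper relies on: the paper gives no separate proof of this proposition (Appendix~\ref{app:proofs} does not treat it) and regards the $O(BQM)$ broadcast product and the absence of a history buffer as immediate from the diagonal form of Eq.~\eqref{eq:homogeneous}. Your explicit per-entry constant $c_{\mathrm{ML}}$ for the Mittag--Leffler evaluation cost is a useful clarification that the paper leaves implicit; since $|z_{q,n}|=|\mu_n|\,t_q^{\alpha}$ typically grows with $M$, what keeps that constant independent of $B$, $Q$, and $M$ is the finite schedule $w_1<\cdots<w_J$ together with the fixed-cost asymptotic branch, not the remainder criterion of Eq.~\eqref{eq:alternating-bound}, whose required truncation depth grows with $|z|$.
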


\begin{proposition}[Differentiability on a fixed numerical branch]
If the selected evaluator branch is differentiable and finite in an open neighborhood of the parameters, the MLSL map is differentiable with respect to trainable orders and differentiable operator parameters in that neighborhood.
\end{proposition}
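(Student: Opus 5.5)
The argument is a chain-rule composition. I write the MLSL forward map on a fixed numerical branch as
\[
(\theta_\alpha,\theta_\beta,\vartheta)\;\mapsto\;\alpha,\ \beta,\ A_\beta
\;\mapsto\;\bigl(\Phi,\ \mu_n(\beta)\bigr)
\;\mapsto\;z_{n}(t)=-\mu_n(\beta)t^{\alpha}
\;\mapsto\;E^{\mathrm{br}}_{\alpha}(z_n(t))
\;\mapsto\;u_M(t)=\Phi\,\diag[E^{\mathrm{br}}_{\alpha}(z_n(t))]\,\Phi^{-1}u_0 .
\]
Here $E^{\mathrm{br}}$ is the evaluator actually executed on that branch, and $\vartheta$ collects the differentiable operator parameters. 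If each arrow is differentiable on the neighborhood, the composition is differentiable there. Its derivative is then given by Eq.~\eqref{eq:chain}.

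\textbf{Fixing the branch.} By hypothesis, on the open neighborhood the discrete choices are constant. These are the routing thresholds, the series or asymptotic selection, and the adaptive budget $w_j$ chosen through Eq.~\eqref{eq:adaptive}. So the executed map is a fixed finite composition of elementary operations. This is the point of the detached stop decision: we never need to differentiate an indicator. In the blended regime, Eq.~\eqref{eq:blend} involves $\operatorname{clip}$. Inside the open neighborhood, $q$ either stays in $(0,1)$ or stays at a constant endpoint value. Where $q$ hits an endpoint, $s'(0)=s'(1)=0$ keeps $s\circ q$ differentiable, with the one-sided derivatives matching zero. The maps $\theta\mapsto\alpha$ and $\theta\mapsto\beta$ are bounded smooth transforms, hence $C^1$. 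Finally, $t>0$ makes $t^\alpha=e^{\alpha\log t}$ smooth in $\alpha$.

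\textbf{The spectral step.} This is the main obstacle. The eigenpairs $(\Phi,\mu_n)$ need not depend differentiably on $\beta$ or $\vartheta$ when eigenvalues coalesce. I would handle it by cases on how the basis enters.
\begin{itemize}
\item If $\Phi$ is fixed (a regular spectral basis) and $\mu_n(\beta)$ is given in closed form, differentiability is immediate.
\item If eigenvalues are computed numerically, I would take "differentiable branch" to include simplicity of the retained eigenvalues on the neighborhood. The implicit function theorem applied to $(A-\mu I)\phi=0$, with a normalization, then gives $C^1$ eigenpairs.
\item For the Lanczos path, the executed map on a fixed Krylov dimension is a rational function of $A$ and $u_0$. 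It is differentiable as long as no breakdown occurs, that is, no $\beta_k=0$, which again is a branch/finiteness condition. The small tridiagonal eigenproblem is then treated as in the previous case.
\end{itemize}

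\textbf{Conclusion.} On the truncated series branch, $E^{\mathrm{br}}$ is a finite sum of terms $z^k/\Gamma(\alpha k+\beta)$. These are smooth in $(z,\alpha)$ because $1/\Gamma$ is entire. On the asymptotic branch, the finite expansion is smooth wherever it is finite, which the hypothesis supplies. Composing all these $C^1$ maps with the bilinear reconstruction gives differentiability of $u_M$. The resulting derivative is what autograd computes, since no special-function call is detached.
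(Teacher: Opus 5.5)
Your proposal is correct and follows essentially the same route as the paper's appendix argument: on a fixed branch the MLSL output is a finite composition of differentiable basis transforms, modal-rate maps, powers, special-function approximations and tensor contractions, so the chain rule yields Eq.~\eqref{eq:chain}, and the blend of Eq.~\eqref{eq:blend} is handled through $s'(0)=s'(1)=0$. Your case analysis of the spectral step (simple retained eigenvalues, no Lanczos breakdown) only makes explicit what the paper absorbs into its assumption of differentiable basis transforms; one small correction is that the fixed-dimension Lanczos map is smooth away from breakdown rather than rational, because the normalizations $\beta_k=\|w_k\|$ involve square roots.
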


\begin{proposition}[Piecewise differentiability under adaptive work control]
Let every candidate map $u_{w_j}(\vartheta)$ be continuously differentiable on an open parameter set. On any open subset where Eq.~\eqref{eq:adaptive} selects the same $w_j$, the adaptive output is continuously differentiable and its derivative equals $\partial u_{w_j}/\partial\vartheta$. This statement makes no continuity claim across a boundary where the selected budget changes.
\end{proposition}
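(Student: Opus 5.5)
The plan is to express the adaptive output as a selector composed with the candidate maps, and then observe that on any open set where the selector is constant, the output coincides with one fixed candidate. Let $U\subseteq\Theta$ be the open parameter set on which every $u_{w_j}$ is $C^1$, and let $V\subseteq U$ be an open subset on which the stopping rule of Eq.~\eqref{eq:adaptive} always selects the same index $j$. For $\vartheta\in U$, write the adaptive output as
\begin{equation*}
u_{\mathrm{ad}}(\vartheta)=\sum_{i=1}^{J}\chi_i(\vartheta)\,u_{w_i}(\vartheta),
\end{equation*}
where $\chi_i(\vartheta)\in\{0,1\}$ indicates that $w_i$ is the first budget satisfying Eq.~\eqref{eq:adaptive}. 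If no budget passes, the controller returns a fixed fallback, for example $w_J$; this case is treated as one more index. The selector $\chi$ depends on $\vartheta$ only through the detached quantities $\eta_i$ and the thresholds. By hypothesis, $\chi_j\equiv1$ and $\chi_i\equiv0$ for $i\neq j$ on $V$, so $u_{\mathrm{ad}}|_V=u_{w_j}|_V$ as functions.

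The second step is to transfer differentiability. Because $V$ is open, each $\vartheta_0\in V$ has a ball $B(\vartheta_0,\rho)\subseteq V$ on which $u_{\mathrm{ad}}$ and $u_{w_j}$ agree. Differentiability and the value of the derivative at $\vartheta_0$ depend only on a function's values in a neighborhood of $\vartheta_0$. Hence $u_{\mathrm{ad}}$ is differentiable at $\vartheta_0$ with $Du_{\mathrm{ad}}(\vartheta_0)=Du_{w_j}(\vartheta_0)$. Continuity of $Du_{w_j}$ on $U\supseteq V$ then gives continuity of $Du_{\mathrm{ad}}$ on $V$, so $u_{\mathrm{ad}}\in C^1(V)$. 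This is the same argument as in the preceding fixed-branch proposition, applied to the branch indexed by $w_j$.

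The third step, which I expect to be the main obstacle, is to check that the reverse-mode gradient PyTorch actually computes equals this analytical derivative. The stop decision is detached, so backpropagation sees only the graph of $u_{w_j}$. However, $u_{w_j}$ may share subexpressions with earlier candidates; nested truncations or Krylov bases reuse partial sums. I would handle this by noting that autodiff of the composite graph returns the exact derivative of the function that graph represents, namely $\vartheta\mapsto u_{w_j}(\vartheta)$, whatever intermediate quantities are shared. The detached $\eta_i$ enter only through $\chi$, which is locally constant on $V$, so they contribute zero derivative there anyway. No gradient is therefore lost by detaching.

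Finally, I would explain why nothing is claimed at the boundary of selection regions. At a parameter where the selected index switches from $j$ to $j'$, the two one-sided limits are $u_{w_j}$ and $u_{w_{j'}}$. These generally differ by roughly the tolerance level, so $u_{\mathrm{ad}}$ can be discontinuous there. The statement's restriction to open subsets with a constant selection is therefore necessary, not merely convenient.
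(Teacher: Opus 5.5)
Your proposal is correct and follows essentially the paper's argument. On an open set $\Omega_j$ where the detached stopping rule selects $w_j$, the returned output is exactly $u_{w_j}(\vartheta)$, so differentiability and the derivative identity are inherited locally, and nothing is asserted on $\partial\Omega_j$. Your extra remarks on autodiff through shared nested subexpressions and on possible jumps at selection boundaries go beyond what the paper's proof spells out, but they are consistent with it.
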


\section{The DFSC Software Ecosystem}

\subsection{Three-layer organization}
The package turns the mathematical core into three user-facing layers (Table~\ref{tab:layers}). This organization corresponds to increasingly complete research workflows without exposing internal project planning in the public API.

\begin{table}[t]
\centering
\caption{DFSC software layers and their evidence.}
\label{tab:layers}
\small
\begin{tabularx}{\linewidth}{@{}p{0.17\linewidth}p{0.27\linewidth}X p{0.22\linewidth}@{}}
\toprule
Layer & Main objects & Role & Validation \\
\midrule
Numerical primitive & MLSL, forced MLSL, scalar evaluators, Krylov actions & Differentiable known propagation & Identities, references, gradients, branch probes \\
Learning workflows & Trainable orders, inverse solver, residual regressor, application templates & Estimate orders and combine known and learned terms & Multi-seed inverse, hybrid and real-data studies \\
Software ecosystem & Problem, algorithm, solve, selector, solution, reliability, registry & Reusable and diagnosable execution & Unit tests, API smoke tests, artifact and release gates \\
\bottomrule
\end{tabularx}
\end{table}

\subsection{Problem--algorithm--solve interface}
A \texttt{Problem} stores the operator, initial state, times, orders, forcing, and structural assumptions. An \texttt{Algorithm} selects direct spectral, stable spectral, Lanczos, Arnoldi, L1, FFT history, or semilinear execution. \texttt{solve(problem, algorithm)} returns values, status, statistics, warnings, and a \texttt{ReliabilityReport}. Automatic selection uses declared operator structure and workload requirements; it does not infer mathematical validity from data alone.

\subsection{Applicability and reliability}
The applicability contract distinguishes supported, experimental, and unsupported settings. Variable- and distributed-order wrappers are marked experimental. Complex arguments are accepted only through the controlled complex evaluator and Arnoldi path. A successful return code indicates successful execution, whereas the reliability level records validated-domain membership, finite values, convergence, gradient status, and the availability of an empirical or rigorous error estimate. The current implementation explicitly reports that no global rigorous solver error bound is available.

\subsection{Learning composition}
DFSC supports direct parameter recovery and hybrid models of the form
\begin{equation}
\widehat{u}(x,t)=u_{\mathrm{MLSL}}(x,t;\alpha,\beta,\vartheta)+r_{\theta}(x,t,u_{\mathrm{MLSL}}),
\label{eq:hybrid}
\end{equation}
where $\vartheta$ denotes operator parameters and $r_{\theta}$ is a neural correction. The decomposition is useful when retained linear dynamics are trusted but forcing, constitutive response, or unresolved scales are not.

\subsection{Scope of version 0.1.0}
The single distributed namespace is \texttt{dfsc}. Version 0.1.0 contains 22 implemented components and two experimental wrappers. It is a specialized beta: it is not yet distributed on PyPI, has no hosted API site or independent adopters, and is not a general fractional solver. Its strongest current domain is differentiable propagation for known or approximated Mittag--Leffler operator structure.

\section{Experimental Design}

The evidence suite is organized around numerical fidelity, error certification, adaptive work control, differentiability, inverse identifiability, repeated-query efficiency, external-software comparison, learning utility, and real-task validity. Unless stated otherwise, computations use PyTorch 2.11.0 with CUDA 12.8; GPU tests use an NVIDIA GeForce RTX 5070 Laptop GPU. Synthetic studies use manufactured solutions so that reference values and parameters are known.

\subsection{Numerical core}
Tests include $E_1(z)=e^z$, $E_2(-x^2)=\cos x$, $E_{1,1}(z)=e^z$, high-precision scalar references, dense matrix references, and autograd--finite-difference comparisons. Transition probes cover 63 cases around numerical branch boundaries. FFT history, Lanczos, Arnoldi, and matrix-free actions are compared with direct or dense references.

\subsection{Learning workflows}
Synthetic inverse tests estimate $\alpha$ and $\beta$ from complete, noisy, and sparse observations. Hybrid tests compare a retained MLSL backbone, a residual MLSL model, FNO, and DeepONet on a controlled known-propagator dataset. The neural controls use 126 training pairs and three seeds. A budget sensitivity study retains each architecture, optimizer, learning rate, data split, and initialization while increasing FNO updates from 90 to 360 and DeepONet updates from 120 to 480. A separate sample-efficiency matrix uses nested training sets of 16, 32, 64, and 128 fields and evaluates IID, time-, order-, forcing-, and joint-OOD regimes over three seeds. These baselines diagnose learning burden in the matched setting rather than universal architectural superiority: the data generator contains the exact retained backbone.

\subsection{External solvers and real experimental tasks}
The external software benchmark solves both ${}^{C}D_t^{\alpha}y=-y$, $y(0)=1$ and a stable coupled two-state linear system for $\alpha\in\{0.55,0.75,0.95\}$, using up to 1,025 query points. It compares DFSC adaptive direct queries with FDEint~0.1.1 predictor--corrector stepping and pycaputo~0.10.2 PECE under recorded package versions and identical CPU precision. Because only DFSC exploits the known propagator, timing results quantify a specialized-use advantage rather than general solver superiority. Direct queries are useful at selected times and under repeated parameter updates; continuous nonlinear forcing or path-dependent stepping generally requires the L1/FFT or semilinear workflows and forfeits this direct-query advantage.

The first real domain uses AnomDiffDB trajectories \citep{granik2019single,anomdiffdb}. The 750-nm H-actin condition contains 3,112 trajectories; a separate water--glycerol dataset contributes 660 trajectories longer than 40 frames, partitioned deterministically into slow (104) and fast (556) empirical populations from log median one-frame diffusivity. Five 70/30 trajectory splits fit lags 1--20 and extrapolate to lags 21--40. The second domain uses four geomembrane stress-relaxation tests at stretch ratios 1.001--1.008 from an EPFL CC BY 4.0 dataset \citep{vorlet2025geomembrane}. Ten-second median bins are sampled at 120 times; the first 60\% are used for fitting and the final 40\% for extrapolation. Three seeds compare exponential, stretched-exponential, fractional-Zener/MLSL, pure-MLP, and MLSL--residual models.

The third domain uses four thermocouple histories from two consecutive charging--discharging cycles of a pilot-scale geopolymer thermal-energy-storage prototype \citep{georgiou2026geotes}. The CC BY 4.0 workbook is aligned at 445 common timestamps without interpolation or smoothing. T1 is treated as a measured thermal driver and T2--T4 as ordered response channels. The first cycle is used for identification and the second for transfer, with 56 samples retained per cycle. The source does not document sensor coordinates in the workbook or README; accordingly, this experiment evaluates forced multi-channel transfer and does not claim spatial-field reconstruction.

The fourth domain uses 32,370 measured temperatures from 16 heated-steam injection experiments at ten documented depths \citep{garciamartinez2025heatedsteam}. The source workbook is converted to long form without interpolation or smoothing. Experiments 5, 8, 12, and 16 are held out because they contain the largest flow, inlet temperature, column height, and water content, respectively. After a fixed five-minute sampling rule, 5,090 training and 1,470 held-out rows compare a matched integer forced spectral model, a trainable fractional model, a pure MLP, and a smaller fractional--residual model over three seeds.

\section{Results}

\subsection{Numerical reliability and gradients}
Table~\ref{tab:numerics} reports representative numerical checks. The restricted certificate in Eq.~\eqref{eq:alternating-bound} is tested separately on 60 combinations of $\alpha$, $|z|$, and truncation depth. Fifty-nine satisfy its eligibility test; all 59 bounds contain the richer-series reference error. For errors above $10^{-14}$, the median ratio of bound to observed error is 1.246. The remaining case is reported as uncertified because the terms have not yet entered the decreasing regime. The alpha derivative agrees with finite differences to $2.17\times10^{-10}$ relative discrepancy in the pre-release gate. FFT history and matrix-free paths agree with their references near floating-point precision. Controlled complex evaluation is validated only for $|z|\leq4$ and Arnoldi reduced radius at most 4.

\begin{table}[t]
\centering
\caption{Representative numerical validation results. Values are relative errors unless noted.}
\label{tab:numerics}
\small
\begin{tabularx}{\linewidth}{@{}Xrr@{}}
\toprule
Check & Result & Declared scope \\
\midrule
Alpha autograd versus finite difference & $2.17\times10^{-10}$ & Pre-release scalar case \\
FFT versus direct Caputo--L1 trajectory & $5.09\times10^{-16}$ & Uniform-grid offline trajectory \\
Sparse/matrix-free versus dense action & $2.09\times10^{-15}$ & 64-state reference \\
Lanczos full-dimension action & $1.83\times10^{-15}$ & Self-adjoint reference \\
Complex scalar Mittag--Leffler & $1.39\times10^{-15}$ & $|z|\leq4$ \\
Arnoldi matrix action & $6.86\times10^{-17}$ & Reduced radius $\leq4$ \\
\bottomrule
\end{tabularx}
\end{table}

The 63 transition probes return finite values and gradients in every case. Their maximum adjacent value and gradient changes are $5.20\times10^{-3}$ and $4.43\times10^{-2}$, respectively. These measurements support usable local behavior but do not establish a global smoothness theorem.

Figure~\ref{fig:adaptive-gradient} isolates the separate effect of adaptive work selection. Across 185 values of $\alpha\in[0.52,0.98]$, the controller changes its series budget three times; all evaluations converge and all gradients are finite. The adaptive objective and its $\alpha$ derivative agree with a fixed 180-term series to double-precision resolution at every probe, including the three observed switches. In six inverse runs, gradients and losses remain finite for Adam learning rates $10^{-3}$, $5\times10^{-3}$, and $2\times10^{-2}$; under a fixed 160-update budget, the largest learning rate reaches the target from both initializations, whereas smaller rates progress more slowly. The experiment therefore supports stable selected-path differentiation in the tested regime, not learning-rate invariance or global smoothness across every possible selection boundary.

\begin{figure}[t]
\centering
\includegraphics[width=0.96\linewidth]{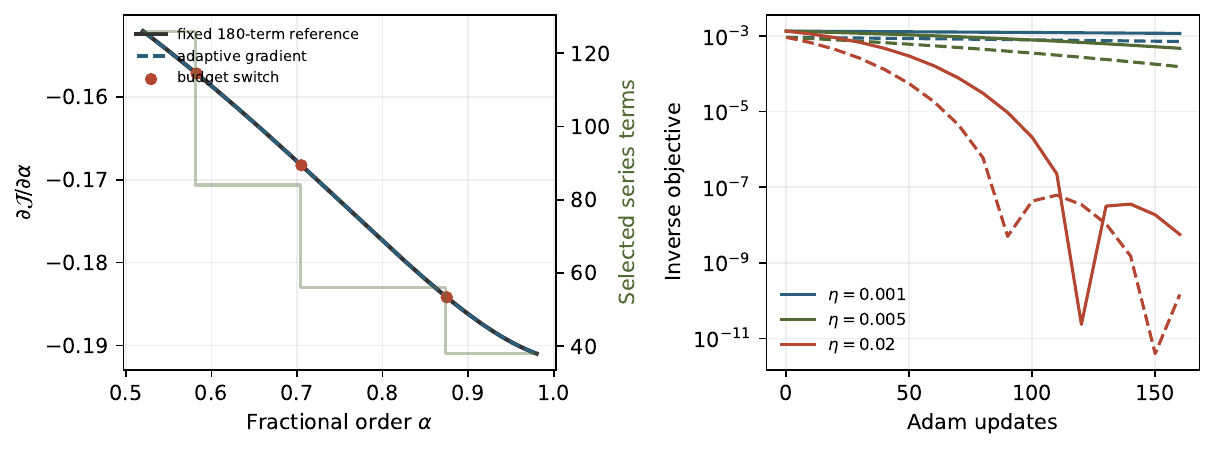}
\caption{Adaptive-gradient diagnostics. Left: the selected-path gradient and a fixed 180-term reference across three observed work-budget switches. Right: inverse objectives for two initial orders at three learning rates under the same 160-update budget.}
\label{fig:adaptive-gradient}
\end{figure}

\subsection{Adaptive work control and external software}
Figure~\ref{fig:adaptive} reports 30 GPU calibrations on weighted Laplacians of size 64 and 128 over five seeds. Mean selected Krylov dimension rises from 13.2 at tolerance $10^{-3}$ to 20.8 at $10^{-5}$ and 41.6 at $10^{-7}$. Convergence rates are 100\%, 100\%, and 80\%, respectively; nonconverged cases return an explicit exhausted-schedule status. Mean actual errors are $6.20\times10^{-6}$, $8.40\times10^{-7}$, and $2.74\times10^{-8}$, and all 30 errors are below $10\,\mathrm{rtol}$. This is empirical calibration of Eq.~\eqref{eq:adaptive}, not a proof of a posteriori reliability.

\begin{figure}[t]
\centering
\includegraphics[width=0.96\linewidth]{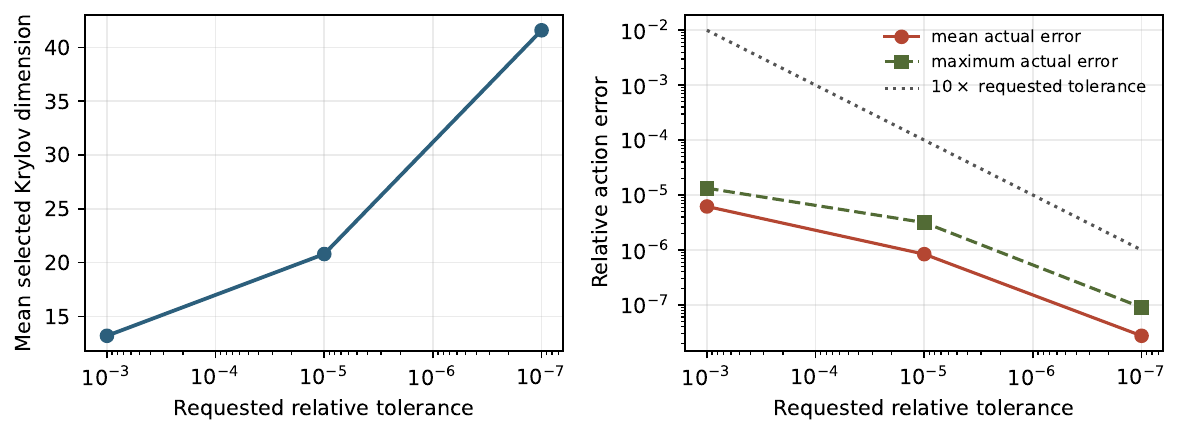}
\caption{Adaptive Lanczos calibration on an RTX 5070 Laptop GPU. Tighter requested tolerances select larger subspaces and reduce dense-reference action error. Successive-action disagreement remains an empirical indicator rather than a rigorous bound.}
\label{fig:adaptive}
\end{figure}

At 1,025 query points, DFSC takes 1.7--3.0 ms across the three orders, pycaputo PECE takes 71.7--76.3 ms, and FDEint takes 154--159 ms. DFSC relative errors against pymittagleffler are below $10^{-15}$ in this moderate negative-real regime; pycaputo errors range from $5.0\times10^{-7}$ to $1.36\times10^{-5}$ and FDEint errors from $9.67\times10^{-4}$ to $1.18\times10^{-3}$. Both DFSC and FDEint return finite gradients with respect to $\alpha$; pycaputo uses NumPy stepping. The benchmark favors DFSC by construction because the analytical propagator is known, which is precisely the intended decision boundary.

The coupled system gives the same ordering without reducing the task to independent scalar input/output. At 1,025 points, DFSC takes 2.9--4.8 ms with relative error between $6.6\times10^{-15}$ and $1.4\times10^{-12}$; pycaputo takes 124--128 ms with errors between $3.0\times10^{-6}$ and $7.4\times10^{-5}$; and FDEint takes 178--188 ms with errors between $2.92\times10^{-3}$ and $3.22\times10^{-3}$. This extension tests state coupling and modal reconstruction while retaining the same scoped known-propagator premise.

\begin{table}[t]
\centering
\caption{External-package benchmark at 1,025 query points over $\alpha\in\{0.55,0.75,0.95\}$. Entries are ranges of three-repeat median runtimes and relative errors.}
\label{tab:external}
\small
\begin{tabularx}{\linewidth}{@{}llrr@{}}
\toprule
Task & Method & Runtime (ms) & Relative error \\
\midrule
Scalar & DFSC direct & 1.7--3.0 & $<10^{-15}$ \\
Scalar & pycaputo PECE & 71.7--76.3 & $5.0\times10^{-7}$--$1.36\times10^{-5}$ \\
Scalar & FDEint PC & 154--159 & $9.67\times10^{-4}$--$1.18\times10^{-3}$ \\
Coupled & DFSC direct & 2.9--4.8 & $6.6\times10^{-15}$--$1.4\times10^{-12}$ \\
Coupled & pycaputo PECE & 124--128 & $3.0\times10^{-6}$--$7.4\times10^{-5}$ \\
Coupled & FDEint PC & 178--188 & $2.92\times10^{-3}$--$3.22\times10^{-3}$ \\
\bottomrule
\end{tabularx}
\end{table}

\subsection{Scalability and accelerator execution}
Prepared Lanczos spaces make the existing within-call time batching reusable across separate order and time queries. Across state sizes 64, 128, and 256, batches of four initial states, and 4--64 order queries, all 18 prepared-path outputs agree exactly with the original fixed-path implementation. Excluding one-time basis construction, median-timed query speedups range from $4.61$ to $7.11\times$ on CPU and from $13.07$ to $16.22\times$ on the RTX~5070. These values quantify amortization for a fixed operator and initial-state batch; they are not end-to-end solver speedups, and the basis must be rebuilt when either changes.
The FFT Caputo--L1 trajectory operator is 77.9 times faster than the direct convolution implementation at 2,048 steps in the measured environment and remains finite at 65,536 steps. For a 4,096-state matrix-free action, the estimated dense storage is 128 MiB versus 1 MiB for a 32-vector Krylov basis, a 128-fold storage ratio. CPU/GPU relative discrepancy is $3.36\times10^{-7}$ in the broad GPU validation suite; specialized FFT and controlled Arnoldi checks are closer to floating-point precision. All tested CPU/GPU gradients are finite. These measurements characterize this implementation and hardware, not universal speed rankings against external libraries.

Figure~\ref{fig:speed} shows the direct-query advantage in the scalar relaxation benchmark. The comparison is with the bundled Python L1 reference and therefore mixes algorithmic and implementation effects; it should not be read as a comparison with optimized external solvers.

\begin{figure}[t]
\centering
\includegraphics[width=0.76\linewidth]{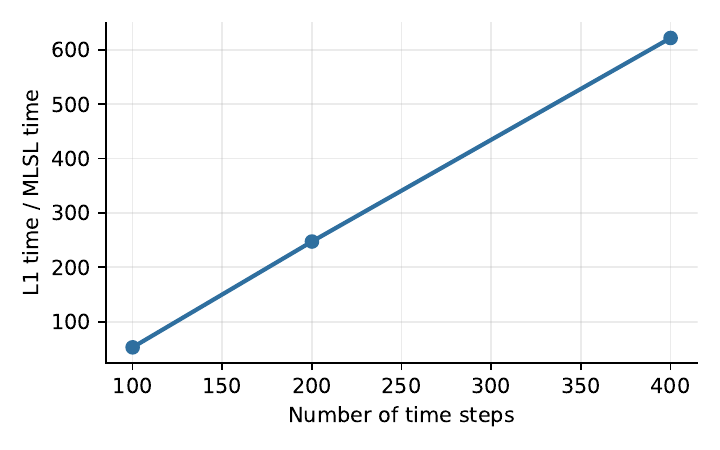}
\caption{Measured direct-query MLSL and bundled L1-reference runtimes. The benchmark illustrates the value of avoiding temporal recurrence when the propagator is known; it is not an external solver leaderboard.}
\label{fig:speed}
\end{figure}

\subsection{Inverse recovery and hybrid learning}
An additional multi-start study evaluates local identifiability for true $(\alpha,\beta)=(0.78,1.15)$ at 12, 32, and 64 sensors and noise standard deviations 0, 0.005, and 0.02. All 27 fitted cases have a positive full-rank Hessian; the maximum Hessian condition number is 8.67, and mean absolute errors are $6.89\times10^{-3}$ for $\alpha$ and $3.64\times10^{-3}$ for $\beta$. The largest absolute local parameter correlation is 0.721. These Hessian and multi-start diagnostics support local recoverability in the manufactured regime but do not prove global identifiability or robustness to model misspecification.
Manufactured joint recovery estimates $\alpha=1.380042$ and $\beta=1.350067$ from true values 1.38 and 1.35. At noise standard deviation $10^{-2}$, relative errors are $1.81\times10^{-3}$ and $1.83\times10^{-3}$. With four sensors and four times, the corresponding errors are $4.18\times10^{-4}$ and $4.44\times10^{-4}$. Figure~\ref{fig:inverse} reports multi-seed recovery.

\begin{figure}[t]
\centering
\includegraphics[width=0.78\linewidth]{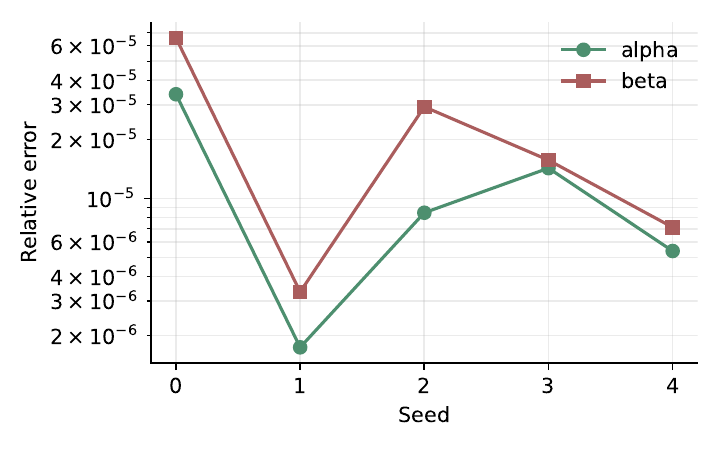}
\caption{Multi-seed inverse recovery of trainable fractional orders under the tested manufactured configurations.}
\label{fig:inverse}
\end{figure}

On the controlled hybrid dataset, the MLSL--residual model attains long-time relative error $0.00326\pm0.00008$, compared with $0.2817\pm0.0012$ for the misspecified MLSL backbone alone. Table~\ref{tab:budget} reports the pure-neural controls under increased update budgets. DeepONet improves modestly at four times the original updates, whereas FNO reduces training error but does not improve long-time extrapolation monotonically. The experiment demonstrates that a known but incomplete propagator can reduce the learning burden in this matched problem. Because the generator contains the same retained fractional structure and no exhaustive architecture search was conducted, these values do not establish general superiority over neural operators.

The nested sample-efficiency matrix gives the same scoped conclusion across five test regimes. With 16 training fields, the pure MLP has IID and joint-OOD errors $0.296\pm0.033$ and $0.545\pm0.124$, whereas MLSL--residual learning gives $0.00462\pm0.00111$ and $0.0480\pm0.0196$. At 128 fields, the corresponding pure-MLP errors are $0.0765\pm0.0078$ and $0.2039\pm0.0373$, compared with $0.000575\pm0.000017$ and $0.0346\pm0.0073$ for the hybrid. Figure~\ref{fig:sample-real} shows the IID and joint-OOD slices. Because the backbone is present in the data generator, this experiment measures sample efficiency under structural match, not general OOD dominance.

\paragraph{Learning-system implications.}
Taken together, the inverse, sample-efficiency, and hybrid experiments identify the learning role of DFSC. When the retained propagator is structurally informative, trainable orders provide low-dimensional physical parameters and the residual network is relieved from relearning the dominant response; this can improve sample efficiency and reduce model size. The real-data results also show the complementary limit: under backbone mismatch, the structured component need not improve predictive accuracy. DFSC therefore supplies a testable inductive bias for gradient-based learning, not a universal advantage over unconstrained neural models.

\begin{figure}[t]
\centering
\includegraphics[width=0.98\linewidth]{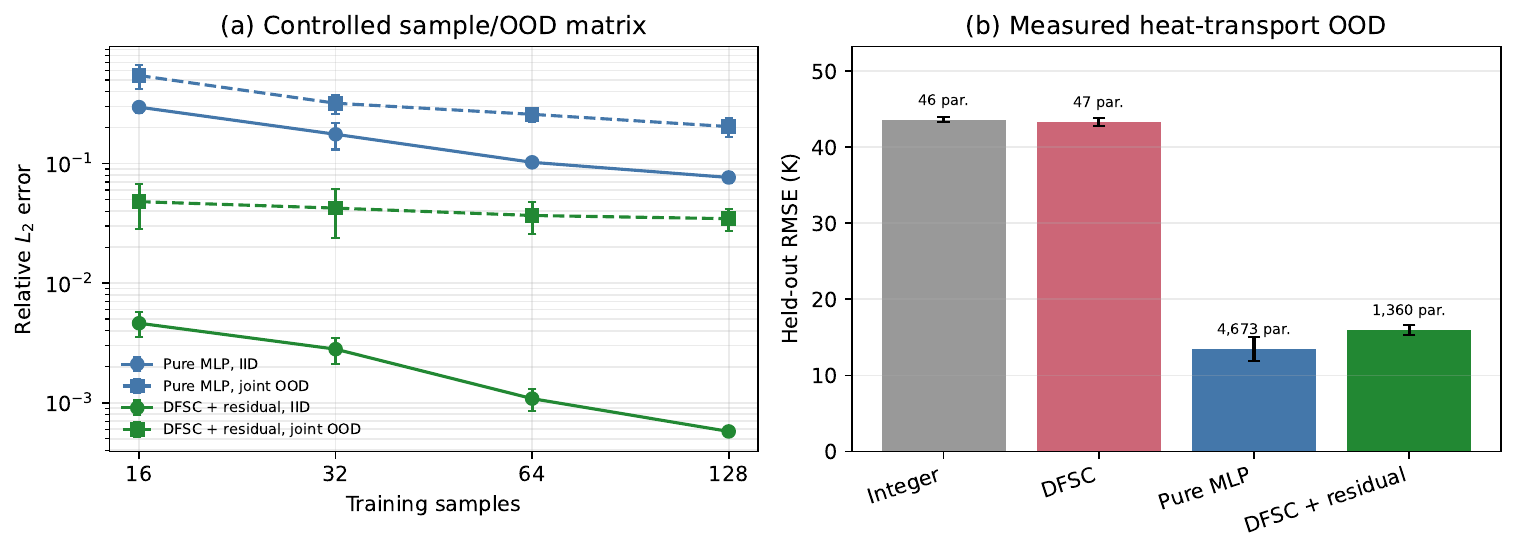}
\caption{Complementary evidence for the structured primitive. Left: sample efficiency and joint OOD behavior on a controlled task containing the retained propagator. Right: measured heated-steam condition OOD, where the pure MLP is most accurate and the smaller hybrid provides a parameter-efficiency tradeoff. Error bars show one standard deviation over three seeds.}
\label{fig:sample-real}
\end{figure}

\begin{table}[t]
\centering
\caption{Training-budget sensitivity of pure-neural diagnostic controls; three seeds, 126 training pairs. Errors are long-time relative $L_2$ mean $\pm$ standard deviation.}
\label{tab:budget}
\small
\begin{tabularx}{\linewidth}{@{}Xrrrr@{}}
\toprule
Model & Parameters & 1$\times$ updates & 2$\times$ updates & 4$\times$ updates \\
\midrule
FNO1D & 79,169 & $2.149\pm0.324$ & $2.056\pm0.330$ & $2.760\pm0.359$ \\
DeepONet1D & 35,361 & $1.593\pm0.053$ & $1.609\pm0.245$ & $1.502\pm0.230$ \\
\bottomrule
\end{tabularx}
\end{table}

\subsection{Real-data evidence}
Table~\ref{tab:spt} gives the scattering-function results on the H-actin condition. Direct MLSL has the lowest mean late-lag error. The hybrid model is substantially better than a pure MLP but does not improve on direct MLSL, indicating that additional neural flexibility is not automatically beneficial when the retained model already explains the observed response.

\begin{table}[t]
\centering
\caption{Experimental H-actin single-particle-tracking results over five trajectory splits. Errors are relative $L_2$ values; mean $\pm$ standard deviation.}
\label{tab:spt}
\small
\begin{tabularx}{\linewidth}{@{}Xrrr@{}}
\toprule
Scattering model & Parameters & Held-out all lags & Late lags 21--40 \\
\midrule
Stretched exponential & 2 & $0.0239\pm0.0074$ & $0.0674\pm0.0279$ \\
Direct MLSL inverse & 2 & $0.0141\pm0.0080$ & $\mathbf{0.0366\pm0.0189}$ \\
Pure MLP & 1,153 & $0.0535\pm0.0170$ & $0.1652\pm0.0555$ \\
MLSL + residual MLP & 323 & $0.0162\pm0.0063$ & $0.0429\pm0.0151$ \\
\bottomrule
\end{tabularx}
\end{table}

Relative to the stretched exponential, direct MLSL reduces mean late-lag error by 45.7\%. Relative to the pure MLP, the hybrid reduces it by 74.0\%. These comparisons support a scoped application claim: retaining a fractional propagation bias can improve data efficiency and extrapolation for this condition. They do not establish that the fitted order uniquely identifies the underlying stochastic mechanism.

Figure~\ref{fig:realmulti} extends the comparison beyond H-actin. In the fast-Brownian population, direct MLSL recovers $\alpha=0.981\pm0.003$ and reduces mean late-lag error from $0.437\pm0.266$ for the pure MLP to $0.152\pm0.097$. In the slow population, however, the stretched exponential gives the lowest mean late-lag error ($0.0205\pm0.0073$), compared with $0.0281\pm0.0222$ for direct MLSL. The empirical clustering does not provide ground-truth bead labels, so these orders remain model-conditional.

Across the four geomembrane tests and three seeds, mean extrapolation errors are $0.261\pm0.038$ for the exponential, $0.350\pm0.021$ for the stretched exponential, $0.409\pm0.030$ for bare fractional-Zener/MLSL, $0.311\pm0.094$ for the pure MLP, and $0.248\pm0.086$ for MLSL--residual composition. The hybrid has the lowest mean but not a decisive low-variance advantage; its median is only slightly lower than that of the exponential (0.257 versus 0.259). These results support residual composition, while directly rejecting a claim that the bare fractional model is uniformly superior.

Table~\ref{tab:geotes} reports cross-cycle transfer on the GeoTES measurements. Trainable fractional orders reduce the second-cycle error relative to the matched integer-order propagator. The 750-parameter hybrid reaches essentially the same mean error as the 1,251-parameter pure MLP, with overlapping seed variation. This supports parameter-efficient residual composition, not a statistically decisive accuracy advantage. The large error of the bare three-channel propagation model also records unresolved forcing, sensor placement, and heat-loss effects rather than being hidden by the neural correction.

\begin{table}[t]
\centering
\caption{Pilot-scale GeoTES transfer from the first measured charging--discharging cycle to the second. Errors are relative $L_2$ mean $\pm$ standard deviation over three seeds.}
\label{tab:geotes}
\small
\begin{tabularx}{\linewidth}{@{}Xrrr@{}}
\toprule
Model & Parameters & Cycle-2 error & Learned $(\alpha,\beta)$ \\
\midrule
Integer propagation & 1 & $0.892$ & $(1,2)$ \\
DFSC & 3 & $0.667$ & $(0.377,1.563)$ \\
Pure MLP & 1,251 & $0.363\pm0.007$ & -- \\
DFSC + residual MLP & 750 & $0.362\pm0.009$ & $(0.399,1.679)$ \\
\bottomrule
\end{tabularx}
\end{table}

The heated-steam benchmark supplies the first real task in this study with explicit spatial coordinates and condition OOD rather than temporal transfer alone. Table~\ref{tab:heated-steam} shows that bare DFSC reduces mean RMSE by only 0.76\% relative to the matched integer model. The pure MLP is most accurate. The hybrid uses 1,360 parameters, 29\% of the pure MLP's count, but its RMSE is 18.4\% higher. This is evidence for a compact-model tradeoff and for the limits of the selected constant-forcing spectral backbone, not a real-data accuracy advantage.

\begin{table}[t]
\centering
\caption{Condition-OOD prediction of measured heated-steam temperature profiles. Four extreme experimental conditions are held out. Values are mean $\pm$ standard deviation over three seeds.}
\label{tab:heated-steam}
\small
\begin{tabularx}{\linewidth}{@{}Xrrr@{}}
\toprule
Model & Parameters & Held-out RMSE (K) & Held-out MAE (K) \\
\midrule
Integer forced spectral & 46 & $43.64\pm0.38$ & $31.13\pm0.27$ \\
DFSC forced spectral & 47 & $43.31\pm0.52$ & $30.94\pm0.34$ \\
Pure MLP & 4,673 & $\mathbf{13.49\pm1.56}$ & $\mathbf{9.66\pm1.15}$ \\
DFSC + residual MLP & 1,360 & $15.98\pm0.63$ & $11.88\pm0.35$ \\
\bottomrule
\end{tabularx}
\end{table}

\begin{figure}[t]
\centering
\includegraphics[width=0.96\linewidth]{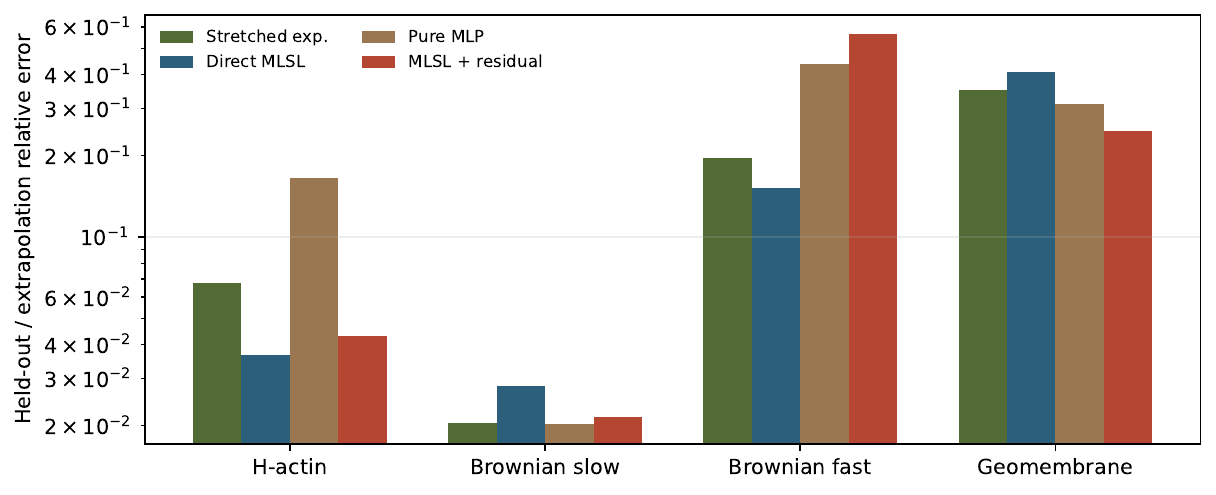}
\caption{Held-out or extrapolation error across four reported real-task groups. No model dominates every condition; the figure motivates DFSC as an optional structured component and shows where a residual head helps or hurts.}
\label{fig:realmulti}
\end{figure}

\subsection{Application and software coverage}
Four application templates test anomalous diffusion, assembled relaxation with a mass matrix, graph diffusion, and advection--diffusion. All produce finite outputs and parameter gradients where applicable; the graph constant mode error is $5.55\times10^{-16}$ and the $\alpha=1$ advection--diffusion exponential error is $1.88\times10^{-16}$. The semilinear Picard workflow converges in three iterations to residual $6.87\times10^{-11}$ in its manufactured test.

The unit-test suite passes all 94 checks, including heated-steam provenance and standardized-data protocol tests, adaptive-control tests, and GeoTES protocol tests (Table~\ref{tab:maturity}). Passing the internal gate means that the checked repository state is internally coherent. It does not imply public-release adoption or production readiness.

\begin{table}[t]
\centering
\caption{Current DFSC maturity assessment.}
\label{tab:maturity}
\small
\begin{tabularx}{\linewidth}{@{}p{0.30\linewidth}p{0.22\linewidth}X@{}}
\toprule
Axis & Status & Evidence or remaining gap \\
\midrule
Differentiable spectral core & Research-grade tested & Identities, references, gradients, CPU/GPU tests \\
Problem/algorithm interface & Public beta & Automatic selection and structured solutions implemented \\
Operator coverage & Scoped broadening & Self-adjoint, generalized, sparse, matrix-free, controlled complex \\
History/nonlinear coverage & Partial & L1, FFT trajectory, and mild-form Picard; not general stepping \\
Applications & Multi-domain experimental & Four templates; SPT, geomembrane, GeoTES, and heated-steam protocols \\
Variable/distributed order & Experimental & Wrappers exist; systematic theory and validation remain \\
Public ecosystem & v0.1.0 released & GitHub, PyPI, hosted documentation, and versioned Zenodo archive; independent adopters remain pending \\
\bottomrule
\end{tabularx}
\end{table}

\section{Discussion}

DFSC is strongest when three conditions hold: a useful part of the dynamics has a Mittag--Leffler representation; selected query times or repeated parameter updates are important; and unknown effects can be represented by trainable orders, forcing, or a residual network. In this regime, the package exposes known propagation directly inside the learning graph and avoids requiring a network to rediscover it.

The ecosystem design is as important as the primitive. A bare layer does not tell a user which algorithm is appropriate, whether an evaluator left its tested domain, or how to reproduce a result. The problem--algorithm--solve interface, applicability report, reliability metadata, application templates, and internal release gate make these assumptions visible. This is the principal advance from an isolated MLSL implementation to DFSC as a coherent research tool.

Comparison with mature ecosystems clarifies the remaining distance. DifferentialEquations.jl has far broader equation, algorithm, sensitivity, and community coverage. FractionalDiffEq.jl addresses more fractional equation classes and time-stepping methods. DeepXDE and NeuralOperator provide substantially richer learning architectures, geometry support, documentation, and user communities. DFSC's advantage is correspondingly specialized: a native PyTorch path from Mittag--Leffler spectral structure to trainable orders, matrix-function actions, reliability metadata, and hybrid neural composition.

\section{Limitations and Threats to Validity}

The current validated domain is narrower than the mathematical domain of Mittag--Leffler functions. Most paths target real non-positive spectral arguments; controlled complex support is limited to the stated radius. Equation~\eqref{eq:alternating-bound} certifies only eligible series truncations; global hybrid-evaluator, Krylov, projection, and solver error bounds have not been proved. Variable- and distributed-order operators remain experimental. Complex geometry, unstructured meshes, and JAX or Julia backends are not implemented.

Synthetic inverse and neural experiments use manufactured data and known operator structure. The FNO and DeepONet budget sweep controls update count but is not an exhaustive architecture or hyperparameter search; the fPINN configuration remains diagnostic. The external FDEint/pycaputo comparison deliberately favors direct propagation and cannot establish general solver speed, particularly for continuous nonlinear forcing or path-dependent integration. The adaptive discrepancy is not a rigorous error bound, the selected map is only piecewise differentiable, and the tightest Krylov schedule converges in 80\% of the calibration runs. Real tasks inherit uncertainty from trajectory clustering, preprocessing, finite samples, load-cell noise, model mismatch, and a limited number of physical systems. The GeoTES channel order is used without undocumented sensor coordinates, so it is a multi-channel transfer test rather than spatial-field validation. The heated-steam split changes both physical conditions and data volume across held-out experiments; its poor bare-spectral accuracy shows that the selected constant-forcing backbone is incomplete. AnomDiffDB files are not redistributed pending license clarification; the geomembrane, GeoTES, and heated-steam data are CC BY 4.0.

Finally, the 100\% internal gate is a repository-specific checklist, not a maturity percentage for the global fractional-solver problem. Version 0.1.0 provides a public package, hosted documentation, and continuous release testing on multiple platforms. Broader production maturity still requires independent reproduction, downstream users, issue handling, and wider benchmark coverage.

\section{Reproducibility, Data, and Code Availability}

The repository contains the \texttt{dfsc} package, unit tests, examples, experiment scripts, environment metadata, result summaries, figure generators, and manuscript assets. Version 0.1.0 is available from GitHub (\url{https://github.com/hzhooning-art/DFSC}), PyPI (\url{https://pypi.org/project/dfsc/0.1.0/}), and the documentation site (\url{https://hzhooning-art.github.io/DFSC/}). The immutable release at commit \texttt{f671e2f63c5ef133491a0dfbb2272baccaac245c} is archived under the version DOI \href{https://doi.org/10.5281/zenodo.21588834}{10.5281/zenodo.21588834}. The reported environment uses Python 3.12.13, PyTorch 2.11.0+cu128, FDEint 0.1.1, and pycaputo 0.10.2. AnomDiffDB, EPFL, GeoTES, and heated-steam checksums, preprocessing rules, split definitions, and raw per-run results are recorded. The heated-steam source workbook and standardized 32,370-row table are retained under the CC BY 4.0 license; AnomDiffDB raw files are excluded because their redistribution terms are not explicit.

\section{Conclusion}

DFSC provides a reusable PyTorch learning environment in which Mittag--Leffler propagation participates directly in automatic differentiation, inverse learning, and hybrid neural modeling. Error certification, tolerance calibration, repeated-query reuse, external solver comparisons, and 12 real experimental conditions in four physical domains support a specialized conclusion: when a useful Mittag--Leffler backbone is known, DFSC provides an efficient, error-aware route to autograd, GPU matrix-function actions, and neural residual composition. The heated-steam result also shows that structural mismatch can erase the bare propagator advantage; in that setting the hybrid offers compactness rather than best accuracy. DFSC is an internally coherent research-grade beta, not a universal solver or production ecosystem.

\section*{Acknowledgments}
This research used computational resources of Hangzhou Dianzi University.

\section*{Funding}
This work was supported by the State Key Laboratory of Ocean Engineering, Shanghai Jiao Tong University (Grant No. GKZD010089), and the State Key Laboratory of Acoustics, Chinese Academy of Sciences (Grant No. SKLA202406). The funders had no role in the study design; collection, analysis, and interpretation of data; writing of the manuscript; or decision to submit the article for publication.

\section*{Declaration of Competing Interest}
The authors declare that they have no known competing financial interests or personal relationships that could have appeared to influence the work reported in this paper.

\section*{CRediT Authorship Contribution Statement}
\textbf{Ning Hu:} Conceptualization, Methodology, Software, Validation, Formal analysis, Investigation, Data curation, Visualization, Writing--original draft, Writing--review and editing. \textbf{Haitao Duan:} Conceptualization, Methodology, Writing--review and editing. \textbf{Shuqun Li:} Validation, Investigation, Writing--review and editing. \textbf{Chuyang Hu:} Validation, Investigation, Writing--review and editing.

\appendix
\section{Proof Details and Error Decomposition}
\label{app:proofs}
Let $P_M=\Phi_M\Phi_M^{-1}$ denote projection onto the retained subspace. Applying Eq.~\eqref{eq:governing} to each retained eigenvector gives the scalar equation ${}^{C}D_t^{\alpha}\widehat u_n+\mu_n\widehat u_n=0$, whose solution is $\widehat u_n(t)=E_{\alpha}(-\mu_n t^{\alpha})\widehat u_n(0)$. Stacking the modes proves Eq.~\eqref{eq:homogeneous}. For a numerical output $\widetilde u_M$, a practical decomposition is
\begin{equation}
\|u-\widetilde u_M\|\leq
\|u-P_Mu\|+
\|P_Mu-u_M\|+
\|u_M-\widetilde u_M\|,
\end{equation}
where the terms represent spatial truncation or projection, model reduction, and numerical evaluation. The final term contains special-function, quadrature or Krylov, and floating-point contributions. Except for the restricted series remainder below, DFSC diagnoses these contributions empirically and does not report them as rigorous global bounds.

To justify Eq.~\eqref{eq:alternating-bound}, define $r_k=a_{k+1}/a_k=x\Gamma(\alpha k+\beta)/\Gamma(\alpha k+\alpha+\beta)$. For $\alpha>0$, the ratio $\Gamma(y+\alpha)/\Gamma(y)$ is increasing in $y>0$ because its logarithmic derivative is $\psi(y+\alpha)-\psi(y)>0$. Hence $r_k$ is nonincreasing. If $a_N\leq a_{N-1}$, then $r_{N-1}\leq1$ and all subsequent term magnitudes decrease. The alternating-series theorem then bounds the entire remainder by $a_N$. The implementation checks the sign, parameter domain, and entry into this decreasing regime before assigning rigorous status.

On a fixed differentiable evaluator branch, Eq.~\eqref{eq:homogeneous} is a composition of differentiable basis transforms, modal-rate maps, powers, special-function approximations, and tensor contractions. The chain rule therefore yields Eq.~\eqref{eq:chain}. For Eq.~\eqref{eq:blend}, $s'(q)=6q(1-q)$, hence $s'(0)=s'(1)=0$. The blending weight therefore contributes no endpoint slope with respect to $r$; exact derivative matching still depends on agreement of the two finite approximations and on fixed routing thresholds.

For adaptive work control, define $\Omega_j$ as an open parameter subset on which the detached stopping rule selects $w_j$. On $\Omega_j$, the returned tensor is exactly $u_{w_j}(\vartheta)$, so differentiability and the derivative identity follow from the assumed regularity of that candidate. The sets meet at equalities in the stopping criterion. Because the implementation does not differentiate the discrete index $j$, this argument supplies no matching condition for derivatives on $\partial\Omega_j$. The numerical scan in Fig.~\ref{fig:adaptive-gradient} tests representative boundaries but does not replace such a condition with a global theorem.

\section*{Declaration of Generative AI and AI-Assisted Technologies in the Manuscript Preparation Process}
During the preparation of this work, the authors used OpenAI Codex to support language editing, manuscript organization, and consistency checks across the manuscript, references, and supplementary files. The authors reviewed, edited, and verified the resulting content, including numerical claims and references, and take full responsibility for the content of the publication.

\bibliographystyle{plainnat}
\bibliography{references}

\end{document}